\theoremstyle{plain}
\newtheorem{theorem}{Theorem}[section]
\newtheorem{proposition}[theorem]{Proposition}
\theoremstyle{definition}
\newtheorem{definition}[theorem]{Definition}
\theoremstyle{remark}
\newcommand{\cf}[1]{\textcolor{red}{}}
\newcommand{\ds}[1]{\textcolor{blue}{}}
\newcommand{\AIM}{\texttt{AIM}} 
\newcommand{\AIMSSP}{\texttt{DD-SSP}} 
\newcommand{\SSP}{\texttt{SSP}} 
\newcommand{\AdaSSP}{\texttt{AdaSSP}} 
\newcommand{\AIMSynth}{\texttt{AIM-Synth}} 
\newcommand{\ObjPert}{\texttt{ObjPert}} 
\newcommand{\X}{\mathcal X}
\newcommand{\1}{\mathbf 1}
\newcommand{\R}{\mathbb R}
\newcommand{\boundx}{ \| \mathcal{X} \| }
\newcommand{\x}{\chi} 
\title{Private Regression via Data-Dependent Sufficient Statistic Perturbation}
\author{%
  Cecilia Ferrando, \; Daniel Sheldon\\
  \\
  Manning College of Information and Computer Sciences\\
  University of Massachusetts, Amherst\\
  \texttt{\{cferrando\},\{sheldon\}@cs.umass.edu} \\
}
\begin{document}

\maketitle

\begin{abstract}
Sufficient statistic perturbation (SSP) is a widely used method for differentially private linear regression.
SSP adopts a data-independent approach where privacy noise from a simple distribution is added to sufficient statistics.
However, sufficient statistics can often be expressed as linear queries and better approximated by data-dependent mechanisms.
In this paper we introduce data-dependent SSP for linear regression based on post-processing privately released marginals, and find that it outperforms state-of-the-art data-independent SSP. 
We extend this result to logistic regression by developing an approximate objective that can be expressed in terms of sufficient statistics, resulting in a novel and highly competitive SSP approach for logistic regression.
We also make a connection to synthetic data for machine learning: for models with sufficient statistics, training on synthetic data corresponds to data-dependent SSP, with the overall utility determined by how well the mechanism answers these linear queries. 
\end{abstract}


\section{Introduction}
Differential privacy (DP)~\citep{dwork2006calibrating} 
is an established mathematical framework for protecting user privacy while analyzing sensitive data. A differentially private algorithm injects calibrated random noise into the data analytic process to mask the membership of single records in the data, limiting the information revealed about them in the output of the privatized algorithm.
The literature encompasses numerous methods for achieving differential privacy across a wide range of machine learning algorithms, including objective perturbation \citep{chaudhuri2008privacy, chaudhuri2011differentially, kifer2012private, jain2013differentially}, with applications to models trained via empirical risk minimization; gradient perturbation \citep{bassily2014private, abadi2016deep}, which is commonly used in deep learning and any models learning via gradient descent; one-posterior sampling (OPS)~\citep{dimitrakakis2017differential, wang2015privacy} with applications in private Bayesian inference; and finally, sufficient statistic perturbation (\SSP{}) \citep{vu2009differential, mcsherry2009differentially, dwork2010differential, zhang2016differential, foulds2016theory, wang2018revisiting, bernstein2019differentially, ferrando2022parametric}, with natural applications in exponential family estimation and linear regression.

\SSP{} adds calibrated random noise to the sufficient statistics of the problem of interest and uses the noisy sufficient statistics downstream to retrieve the target estimate. \SSP{} is appealing for a number of reasons. Sufficient statistics are by definition an information bottleneck, in that they summarize all the information about the model parameters~\citep{fisher1922mathematical}. For many models, like linear regression and exponential family distributions, their sensitivity is easy to quantify or bound, simplifying the DP analysis. Finally, they can be privatized via simple additive mechanisms, like the Laplace or Gaussian mechanism \citep{dwork2014algorithmic}. 

Existing \SSP{} methods are \emph{data-independent}, meaning they add noise to the sufficient statistics in a way that does not depend on the underlying data distribution. In a different branch of DP research, recent work has shown that \emph{data-dependent} mechanisms are the most effective for query answering and synthetic data \cite{hardt2012simple, zhang2017privbayes, gaboardi2014dual, aydore2021differentially, liu2021iterative, mckenna2022aim}.
In this paper, we develop \AIMSSP{}, a data-dependent \SSP{} method that uses private linear query answering to release DP sufficient statistics for linear regression. We use \AIM{} \cite{mckenna2022aim} as our query answering algorithm and show experimentally that \AIMSSP{} outperforms the state-of-the-art data-independent \SSP{} method \AdaSSP{} \citep{wang2018revisiting} (for discrete numerical data, as \AIM{} requires discrete data).

Furthermore, we extend the application of \AIMSSP{} to models without defined finite sufficient statistics by proposing a novel framework for logistic regression, where approximate sufficient statistics are derived and released in a data-dependent way to train the model by optimizing an approximate loss function. Our results show that for logistic regression tasks, \AIMSSP{} achieves better results than the widely used objective perturbation baseline.

Finally, we elaborate on the significance of our results with respect to the increasingly popular practice of training machine learning models on DP synthetic data. Our results support the observation that for these models training on synthetic data generated by linear-query preserving mechanisms effectively corresponds to a form of data-dependent \SSP{}.

\color{black}

\section{Background}
\subsection{Differential privacy} \label{dp-definitions}
Differential privacy (DP)~\citep{dwork2006calibrating} has become the preferred standard for preserving user privacy in data analysis, and it has been widely adopted by private and governmental organizations. Differential privacy allows many data computations (including statistical summaries and aggregates, and the training of various predictive models) to be performed while provably meeting privacy constraints. The concept of neighboring datasets is integral to differential privacy, which aims to limit the influence of any one individual on the algorithmic output in order to safeguard personal privacy. 
\begin{definition}[Neighboring datasets]
Two datasets $D$ and $D'$ are considered neighbors ($D \sim D'$) if $D'$ can be created by adding or deleting a single record from $D$.
\end{definition}
Based on the concept of neighboring datasets, we can define the \emph{sensitivity} of a function:
\begin{definition}[$L_2$ sensitivity]\label{def:sensitivity}
Given a vector-valued function of the data $f: \mathcal{D} \rightarrow \mathbb{R}^{p}$, the $L_{2}$ sensitivity of $f$ is defined as
$\Delta(f)=\max _{D \sim D^{\prime}}\left\|f(D)-f\left(D^{\prime}\right)\right\|_{2}$.
\end{definition}
Differential privacy can be achieved via different mechanisms of addition of calibrated random noise, with slightly different definitions. In this paper, we adopt $(\epsilon, \delta)$-DP, which can be achieved via the Gaussian Mechanism.

\begin{definition}[$(\epsilon, \delta)$-Differential Privacy]
    A randomized mechanism $\mathcal{M}$ : $\mathcal{D} \rightarrow \mathcal{R}$ satisfies $(\epsilon, \delta)$-differential privacy if for any neighbor datasets $D \sim D^{\prime} \in \mathcal{D}$, and any subset of possible outputs $S \subseteq \mathcal{R}$
\begin{align*}
\operatorname{Pr}[\mathcal{M}(D) \in S] \leq \exp (\epsilon) \operatorname{Pr}\left[\mathcal{M}\left(D^{\prime}\right) \in S\right]+\delta
\end{align*}
\end{definition}

\begin{definition}[Gaussian mechanism]
Let $f : \mathcal{D} \rightarrow \mathbb{R}^p $ be a vector-valued function of the input data. The Gaussian mechanism is given by
\begin{align*} \mathcal{M}(D) = f(D) + \nu\end{align*}
where $\nu$ is random noise drawn from $\mathcal{N}(0, \sigma^2 I_p)$ with variance $\sigma^2 = 2\ln(1.25/\delta)\cdot \Delta(f)^2/\epsilon^2$ and $\Delta(f)$ is the $L_2$-sensitivity of $f$. That is, the Gaussian mechanism adds i.i.d. Gaussian noise to each entry of $f(D)$ with scale $\sigma$ dependent on the privacy parameters.
\end{definition}

\begin{definition}[Post-processing property of DP] \citep{dwork2014algorithmic} \label{def:post-processing} If $\mathcal{M}(D)$ is $(\epsilon, \delta)$-DP, then for any deterministic or randomized function $g$, $g(\mathcal{M}(D))$ satisfies $(\epsilon, \delta)$-DP.
\end{definition}

\subsection{Differentially private synthetic data}
One of the most appealing applications of differential privacy is the creation of synthetic data, which is designed to be representative of the original data while maintaining privacy~\citep{hardt2012simple, zhang2017privbayes,  xie2018differentially, jordon2019pate, mckenna2019graphical, rosenblatt2020differentially, vietri2020new, liu2021iterative, aydore2021differentially, mckenna2021winning, vietri2022private, mckenna2022aim}. In this line of work, instead of perturbing the data or data analytic process, a model capturing the data distribution is estimated and used to sample new surrogate data, which can be safely used for the analytic tasks of interest. Research on differentially private synthetic data is ongoing, and two dedicated competitions have been hosted by the U.S. National Institute of Standards and Technology (NIST) \cite{NIST2018,NIST2020}, propelling advances in this area. Private synthetic data has many advantages, including fitting into existing data processing workflows and enabling users to answer multiple queries while maintaining privacy. 
Depending on the characteristics of the data captured in generative mechanism, the resulting synthetic data will be accurate specifically for tasks where those characteristics are relevant.

The key goal is therefore to tailor synthetic data to perform well on select classes of queries (workload) that are relevant to the problem of interest. The subject of generative synthetic data for machine learning tasks has recently gained traction in the literature~\citep{tao2021benchmarking, zhou2024bounding}, with the focus so far limited to benchmarking different synthetic data methods and bounding the empirical error it incurs when used for training machine learning models. However, an end-to-end analysis of the characteristics that make DP synthetic data suitable for machine learning tasks is needed. The following definitions allow us to formally elaborate on  differentially private synthetic data.

\begin{definition}[Dataset] A dataset $D$ is defined as a collection of $n$ potentially sensitive records. Each record $\x^{(i)} \in D$ is a $d$-dimensional vector $(\x^{(i)}_{1}, \ldots, \x^{(i)}_{d})$. 
\end{definition}

\begin{definition}[Domain] The domain of possible values for attribute $\x^{(i)}_j$ is $\Omega_j = \{1, \ldots, m_j\}$. The full domain of possible values for $\x^{(i)}$ is thus $\Omega=\Omega_{1} \times \cdots \times \Omega_{d}$ which has size $\prod_{j} m_{j}=m$. 
\end{definition}

We will later talk about numerical encodings of attributes (Section~\ref{sec:encoding}).

\begin{definition}[Marginals] Let $r \subseteq[d]$ be a subset of features, $\Omega_{r}=\prod_{j \in r} \Omega_{j}, m_{r}=\left|\Omega_{r}\right|$, and $\x_{r}=\left(\x_{j}\right)_{j \in r}$. 
The marginal on $r$ is a vector $\mu_r \in \mathbb{R}^{m_{r}}$ indexed by domain elements $t \in \Omega_{r}$ such that each entry is $\mu_r[t]=\sum_{\x \in D} \mathds{1}\left[\x_{r}=t\right]$ (i.e., counts). 
\end{definition}
With marginal queries, one record can only contribute a count of one to a single cell of the output vector. For this reason, the $L_{2}$ sensitivity of a marginal query $M_{r}$ is 1, regardless of the attributes in $r$. This facilitates the differential privacy analysis for marginal queries.

\begin{definition}[Workload]
A marginal workload $W$ is defined as a set of marginal queries $r_{1}, \ldots, r_{K}$ where $r_{k} \subseteq[d]$. 
\end{definition}

The goal of workload-based synthetic data generation models to minimize the approximation error on workload queries.

\subsection{PrivatePGM and AIM}
\emph{Marginal-based} approaches and \emph{select-measure-reconstruct} methods have generally emerged as effective methods for DP linear query answering and synthetic data generation~\citep{hay2009boosting, li2010optimizing, ding2011differentially, xiao2012dpcube, li2012adaptive, xu2013differentially, yaroslavtsev2013accurate, li2014data, qardaji2014priview, zhang2014towards, li2015matrix, mckenna2021hdmm}. The \emph{select-measure-reconstruct} paradigm operates by selecting a set of queries to measure using a noise addition mechanism to ensure their privacy. The noisy measurements can be used to estimate a target set of queries, called the \emph{workload}. Many differentially private synthetic data generation strategies use \texttt{Private-PGM}~\citep{mckenna2019graphical} to
post-process the noise-perturbed marginals and generate a synthetic dataset that respects them. 
\texttt{Private-PGM} can be used in the context of select-measure-reconstruct paradigms. 
Among these, \AIM{}~\citep{mckenna2022aim} has emerged as state-of-the-art by implementing a few key features. 
First, \AIM{} uses an intelligent initialization step to estimate one-way marginals. This results in a model where all one-way marginals are preserved well, and higher-order marginals can be estimated under an independence assumption. 
Second, \AIM{} uses a carefully chosen subset of the marginal queries and leverages the observation that lower-dimensional marginals exhibit a better signal-to-noise ratio than marginals with many attributes and low counts, and at the same time they can be used to estimate higher-dimensional marginal queries in the workload.
Third, the quality score function for selecting marginals to measure ensures that the selection is ``budget-adaptive'', i.e. it measures larger dimensional marginals only when the available privacy budget is large enough. 
In this work, we use \AIM{} as our query-answering algorithm of choice.
Note that \AIM{} uses zero-concentrated differential privacy (zCDP) \cite{bun2016concentrated}, an alternative privacy definition; the Gaussian mechanism as defined in Section~\ref{dp-definitions} satisfies $\frac{1}{2\sigma^2}$-zCDP \cite{bun2016concentrated}. In our experiments, we work with $(\epsilon, \delta)$-DP and the conversion to zCDP is handled internally in \AIM{}.

\section{Methods}

\begin{figure}[ht]
    \centering
    \includegraphics[width=1.\linewidth]{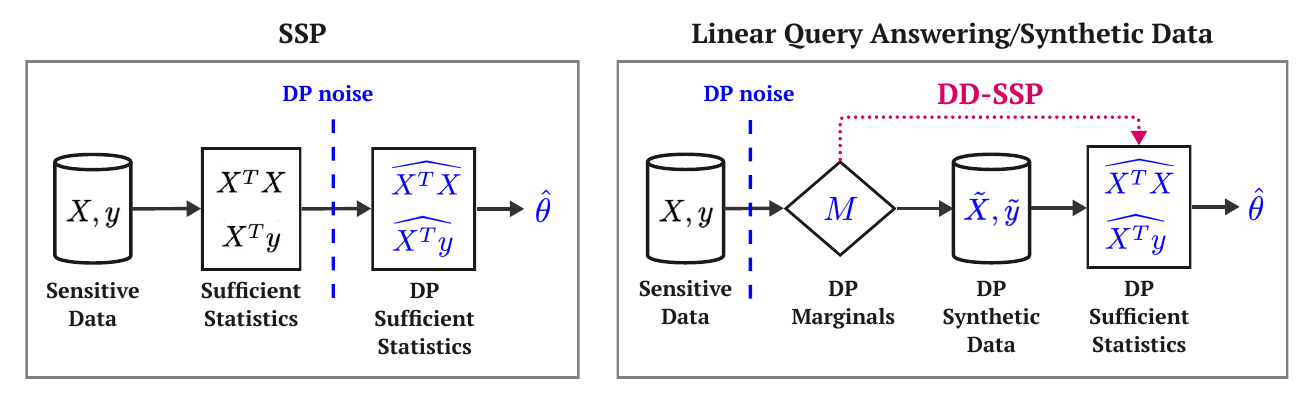}
    \caption{Diagram representing the \SSP{} data-independent workflow (left) vs the data-dependent linear query answering mechanism for marginal release/synthetic data workflow (right). Quantities indicated in blue follow privacy noise injection and are differentially private.
    }
    \label{fig:workflow}
\end{figure}

In this paper, we propose methods to privately estimate sufficient statistics in a data-dependent way. Specifically, our methods leverage privately released marginals computed by a data-dependent linear query answering algorithm (\AIM{}) to estimate sufficient statistics. We call this set of methods \AIMSSP{}. The main advantage of \AIMSSP{} is that it is \emph{data-dependent}. Our approach stems from the insight that, while data-independent noise addition is a simple and established approach to \SSP{}, many sufficient statistics can be expressed as linear queries, creating an opportunity to improve utility by using data-dependent query-answering DP mechanisms, which often achieve higher accuracy than simple additive noise mechanisms \cite{mckenna2022aim}. 
In fact, using private synthetic data to train certain models \emph{is already} a form of SSP. 
Figure~\ref{fig:workflow} compares a standard \SSP{} workflow and our proposed \AIMSSP{} workflows.
As seen in the figure, the pipeline of releasing synthetic data and then training a model via sufficient statistics can be viewed as a specific way of privatizing sufficient statistics for model training.

For linear regression, the application is straightforward: the problem has finite sufficient statistics and we demonstrate that two-way marginal queries are sufficient for their estimation. For other models, finite sufficient statistics are not available, but a polynomial approximation of the loss functions provides approximate sufficient statistics. This is the case for logistic regression, where finite sufficient statistics do not exist, but a Chebyshev polynomial approximation based on \cite{huggins2017pass} allows us to propose an \emph{approximate} version of the learning objective based on approximate sufficient statistics that can be expressed as linear queries, again retrievable via two-way marginal tables.

We use the synthetic data mechanism \AIM{} as our private query answering algorithm and modify its implementation it to output marginals directly, with no need to execute the synthetic data generation step. Depending on the input workload, \AIM{} will privately release marginals that preserve certain linear queries more accurately. We find that a \emph{two-way} marginal workload is sufficient for estimating or approximating the sufficient statistics for both linear and logistic regression. The proposed method is amenable to generalization beyond these classes of problems, and can be potentially extended to others by i) identifying or approximating their sufficient statistics, and ii) customizing the workload passed on as input in \AIM{} accordingly. Since our proposed methods are all based on post-processing DP workload query answers, the differential privacy analysis is straightforward (Definition~\ref{def:post-processing}). 

\color{black}

\subsection{Numerical encoding}
\label{sec:encoding} 
We assume discrete (or discretized) input data, which is a common format for tabular data and required by \AIM{} and other marginal-based approaches.
However, for machine learning, each record $\x$ must be mapped to a numerical vector $z = (x, y)$ where $x \in \R^p$ is a feature vector and $y \in \R$ is a target.
While the details of this encoding are often overlooked, they are important here for two reasons.
First, they are needed to tightly bound $\|x\|$, which is used in sensitivity calculations of a number of DP ML methods, with tighter bounds leading to higher utility.
Second, the encoding is a key part of recovering sufficient statistics from marginals. 

Let $\psi_j(\x_j) \in \R^{m_j}$ be the one-hot encoding of $\x_j$, i.e., the vector with entries $\psi_{j,s}(\x_j) = \1[\chi_j = s]$ for each $s \in \Omega_j$. We consider any numerical encoding of the form $\x_j \mapsto A_j \psi_j(\x_j)$ where $A_j \in \R^{p_j \times m_j}$ is a fixed linear transformation applied to the one-hot vector. 
This covers two special cases of interest.
The first is the scalar encoding with $A_j = v_j^T$ for a vector $v_j \in \R^{m_j}$ that specifies the numerical value for each $s \in \Omega_j$.
In this case the mapping simplifies to $\x_j \mapsto v_j(\x_j)$.
The second special case of interest is when $\Phi_j = I_j$ is the $m_j \times m_j$ identity matrix, so the mapping simplifies to $\x_j \mapsto \psi_j(\x_j)$ to give the one-hot encoding itself. 
Another common variation is a reduced one-hot encoding where $\Phi_j = \tilde{I}_j \in \R^{(m_j-1) \times m_j}$ is equal to $I_j$ with one row dropped to avoid redundant information in the one-hot encoding.

The full encoded record is $z = (z_{[j]})_{j=1}^d$ where $z_{[j]} = A_j \psi_j(\x_j) \in \R^{p_j}$ is the encoding of the $j$th attribute and these column vectors are concatenated vertically. A single entry of $z$ is selected as the target variable~$y$ leaving a feature vector $x$ of dimension $p := (\sum_{j=1}^d p_j) -1$.
Later, we will also use indexing expressions like $(\cdot)_{[j]}$ and $(\cdot)_{[j,k]}$ to refer to blocks of a vector or matrix corresponding to the encoding of the $j$th and $k$th attributes.

Let $z^{(i)} = (x^{(i)}, y^{(i)})$ denote the encoding of record $\chi^{(i)}$ and let $X \in \R^{n \times p}$ be the matrix with $i$th row equal to $(x^{(i)})^T$ and $y \in \R^n$ be the vector with $i$th entry equal to $y^{(i)}$. 
Many DP ML methods require bounds on the magnitude of the encoded data. 
Let $\boundx=\sup_{x \in \mathcal{X}}\|x\|$ and $\|\mathcal{Y}\|=\sup_{y \in \mathcal{Y}}|y|$ be bounds provided by the user where $\mathcal{X} \subset \R^{p}$ and $\mathcal{Y} \subset \R$ are guaranteed to contain all possible encoded feature vectors $x$ and target values $y$, respectively.
For example, a typical bound is $\|\X\| = \|x^+\|$ where $x^+_k \geq \sup |x_k|$ bounds the magnitude of a single feature. If $x_k$ is the scalar encoding of $\x_j$ we can take $x_k^+ = \max_{s \in \Omega_j} |v_j(s)|$. The following proposition describes how to tightly bound a feature vector that combines scalar features and one-hot encoded features. 
\begin{proposition}
\label{prop:boundx}
Suppose $x = (u, w)$ where $u \in \R^{a}$ satisfies $\|u\| \leq \|\mathcal U\|$ and $w \in \R^b$ contains the one-hot encodings (either reduced or not reduced) of $c$ attributes. Then $\|\X\| := \sqrt{\|\mathcal{U}\|^2 + c}$ is an upper bound on $\|x\|$.
\end{proposition}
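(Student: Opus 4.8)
The plan is to exploit the fact that $x$ is the vertical concatenation $x = (u, w)$, so that the squared Euclidean norm splits additively as $\|x\|^2 = \|u\|^2 + \|w\|^2$. The first term is controlled directly by hypothesis, $\|u\|^2 \le \|\mathcal{U}\|^2$, so the entire argument reduces to showing $\|w\|^2 \le c$.

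First I would note that $w$ is itself a concatenation of the $c$ one-hot (or reduced one-hot) blocks, one per attribute, whence $\|w\|^2 = \sum_j \|\psi_j(\x_j)\|^2$ with the sum ranging over the $c$ encoded attributes. I would then bound each block by $1$. For a full one-hot encoding, $\psi_j(\x_j)$ has exactly one entry equal to $1$ and all others equal to $0$, so $\|\psi_j(\x_j)\|^2 = 1$. For a reduced one-hot encoding the dropped coordinate either removes the single nonzero entry (when the active category is the dropped one) or leaves it in place, so the block is either a standard indicator vector or the zero vector; in both cases $\|\psi_j(\x_j)\|^2 \le 1$. Summing over the $c$ attributes gives $\|w\|^2 \le c$.

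Combining the two bounds yields $\|x\|^2 = \|u\|^2 + \|w\|^2 \le \|\mathcal{U}\|^2 + c$, and taking square roots gives $\|x\| \le \sqrt{\|\mathcal{U}\|^2 + c} = \|\X\|$, as claimed. There is no genuine obstacle here; the only step meriting a moment's care is the reduced one-hot case, where one checks that deleting a coordinate never increases the norm — immediate, since removing an entry from a vector can only decrease its Euclidean length. Everything else is the Pythagorean decomposition of the norm of a stacked vector.
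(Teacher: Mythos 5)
Your proof is correct and follows essentially the same route as the paper's: the Pythagorean split $\|x\|^2 = \|u\|^2 + \|w\|^2$ together with the observation that each of the $c$ one-hot blocks has at most a single entry equal to $1$, hence $\|w\|^2 \le c$. Your explicit treatment of the reduced one-hot case (block norm $0$ or $1$) is merely a spelled-out version of the paper's phrase ``at most a single entry of 1,'' so there is no substantive difference.
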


\begin{proof}
$\|x\|^2 = \|u\|^2 + \|w\|^2 \leq \|\mathcal{U}\|^2 + c$ where $\|w\|^2 \leq c$ because $w$ is the concatenation of $c$ vectors each with at most a single entry of 1 and all other entries equal to 0.
\end{proof}

Suppose $u$ consists of scalar features and $\|\mathcal{U}\|$ is obtained by bounding each one separately as described above. This bound is tighter than the naive one of $\sqrt{\|\mathcal{U}\|^2 + b}$ that would be obtained by bounding each entry of the one-hot vectors separately.

\subsection{Linear regression}\label{sec:methods-linreg}
The goal of linear regression is to minimize the sum of squared differences between the observed values $y$ and predicted values $X \theta$ in a linear model with $\theta \in \R^p$. The ordinary least squares (OLS) estimator is obtained by minimizing the squared error loss function $\|y - X\theta\|^2$. Mathematically, the OLS estimator is given by $\hat{\theta} = (X^TX)^{-1}X^Ty$. In this context, the sufficient statistics are 
$T(X, y) = \{X^T X, X^T y\}$.
In \AIMSSP{}, we approximate $T(X,y)$ using linear queries. Specifically, we show that each entry of $X^T X$ and $X^T y$ can be obtained from pairwise marginals. 
The sufficient statistics we will consider all have the form of empirical second moments of the encoded attributes.

\subsubsection{Sufficient Statistics from Pairwise Marginals}

Let $Z = [X, y] \in \R^{n \times (p+1)}$. The matrix $Z^T Z$ has blocks that contain our sufficient statistics of interest:
\begin{equation}
Z^T Z = \begin{bmatrix}
X^T X & X^T y \\
y^T X & y^T y
\end{bmatrix}.
\label{eq:ZTZ-blocks}
\end{equation}
However, we will see that we can also construct $Z^TZ$ directly from marginals.

\begin{proposition}\label{prop:ZTZ}
Let $(Z^TZ)_{[j,k]}$ be the block of $Z^T Z$ with rows corresponding to the $j$th attribute encoding $z_{[j]} = A_j \psi_j(\x_j)$ and columns corresponding to the $k$th attribute encoding $z_{[k]} = A_j \psi_k(\x_k)$. Then 
\begin{align*}
(Z^T Z)_{[j,k]} = A_j \langle\mu_{j,k}\rangle A_k^T
\end{align*}
where $\langle \mu_{j,k} \rangle \in \R^{m_j \times m_k}$ is the $(j,k)$-marginal shaped as a matrix with $(s, t)$ entry $\mu_{j,k}[s, t] = \sum_{i=1}^n \1[\x^{(i)}_j = s, \x^{(i)}_k = t]$. Note that according to this definition $\langle \mu_{j,j}\rangle = \text{diag}(\mu_j)$.
\end{proposition}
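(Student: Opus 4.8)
The plan is to expand the Gram matrix $Z^TZ$ as a sum of outer products over the $n$ encoded records, extract the block indexed by attributes $j$ and $k$, and then exploit the fact that the encoding $z_{[j]} = A_j \psi_j(\x_j)$ is linear in the one-hot vector $\psi_j$. This lets me pull the fixed matrices $A_j$ and $A_k$ outside the summation, reducing the claim to identifying a sum of outer products of one-hot vectors with the contingency table $\langle \mu_{j,k}\rangle$.

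First I would write $Z^TZ = \sum_{i=1}^n z^{(i)} (z^{(i)})^T$, where $z^{(i)}$ is the $i$th encoded record. Since selecting the rows and columns of an outer product corresponding to attributes $j$ and $k$ simply returns the outer product of the corresponding sub-blocks, this yields the block identity $(Z^TZ)_{[j,k]} = \sum_{i=1}^n z^{(i)}_{[j]} (z^{(i)}_{[k]})^T$.

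Next I would substitute $z^{(i)}_{[j]} = A_j \psi_j(\x^{(i)}_j)$ and $z^{(i)}_{[k]} = A_k \psi_k(\x^{(i)}_k)$ and factor the constant matrices out of the sum:
\[
(Z^TZ)_{[j,k]} = A_j \Bigl( \sum_{i=1}^n \psi_j(\x^{(i)}_j)\, \psi_k(\x^{(i)}_k)^T \Bigr) A_k^T.
\]
It then remains to show the inner sum equals $\langle \mu_{j,k}\rangle$, which I would verify entrywise: the $(s,t)$ entry of $\psi_j(\x^{(i)}_j)\,\psi_k(\x^{(i)}_k)^T$ is $\1[\x^{(i)}_j = s]\,\1[\x^{(i)}_k = t] = \1[\x^{(i)}_j = s,\, \x^{(i)}_k = t]$, so summing over $i$ recovers precisely $\mu_{j,k}[s,t]$. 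For the diagonal note, setting $k = j$ makes $\1[\x^{(i)}_j = s,\, \x^{(i)}_j = t]$ vanish unless $s = t$, in which case it reduces to $\1[\x^{(i)}_j = s]$; summing gives $\mu_j[s]$ on the diagonal and $0$ off-diagonal, i.e. $\langle \mu_{j,j}\rangle = \text{diag}(\mu_j)$.

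There is no genuine analytical obstacle here; the work is entirely in the index bookkeeping. The only points requiring care are that block-selection commutes correctly with the outer product and that the one-hot structure makes a sum of cross products literally a joint count table. The real content of the proposition is this last observation — that one-hot cross products are cell indicators — which follows immediately from the definition of $\psi_j$.
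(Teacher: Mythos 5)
Your proof is correct and follows essentially the same route as the paper's: expand $Z^TZ$ as a sum of outer products, restrict to the $(j,k)$ block, factor out the fixed encoding matrices $A_j$ and $A_k$, and identify the remaining sum of one-hot cross products entrywise with the contingency table $\langle \mu_{j,k}\rangle$. Your explicit verification of the diagonal case $\langle \mu_{j,j}\rangle = \mathrm{diag}(\mu_j)$ is a small addition the paper states without proof, but it does not change the argument.
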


This shows that we can reconstruct the sufficient statistic matrix $Z^T Z$ directly from the set of all single-attribute and pairwise marginals. Note that single-attribute marginals $\mu_j$ can be constructed from any $\mu_{j,k}$ with $k \neq j$.

\begin{proof}
The sufficient statistic matrix can be written as $Z^T Z = \sum_{i=1}^n z^{(i)} (z^{(i)})^T$. 
Indexing by blocks gives

{\allowdisplaybreaks
\begin{align*}
(Z^T Z)_{[j,k]} 
&= \sum_{i=1}^n z_{[j]}^{(i)} (z_{[k]}^{(i)})^T \\
&= \sum_{i=1}^n A_j \psi_j(\x_j^{(i)})\psi_k(\x_k^{(i)})^T A_k^T \\
&= A_j \Big(\sum_{i=1}^n \psi_j(\x_j^{(i)})\psi_k(\x_k^{(i)})^T\Big) A_k^T \\
&= A_j \langle\mu_{j,k}\rangle A_k^T,
\end{align*}}

In the last line, we used that $\psi_j(\x_j^{(i)})\psi_k(\x_k^{(i)})^T$ is a matrix with $(s, t)$ entry equal to $\1[\x_j^{(i)}=s, \x_k^{(i)}=t]$, so summing over all $i$ gives the matrix $\langle \mu_{j,k} \rangle$.
\end{proof}

Algorithm~\ref{alg:DPQuery} outlines how to retrieve approximate sufficient statistics $\widetilde{X^T X}$ and $\widetilde{X^T y}$ from marginals privately estimated by \AIM{}.

\begin{algorithm}
\begin{algorithmic}[1]
\caption{\AIMSSP{}}\label{alg:DPQuery}
\small
    \STATE \label{line:DPtablesOLS} $M \gets \AIM{}(\mathcal{D}, \epsilon, \delta)$ is the collection of privately computed pairwise marginal tables $\mu_{j, k}$ for all attribute pairs $(j, k)$.\\
    \STATE $(\widetilde{Z^T Z})_{[j,k]} \gets A_j \langle \mu_{j,k}\rangle A_k^T$ for all attribute pairs $(j, k)$ (see Proposition~\ref{prop:ZTZ})\\
    \STATE Extract $\widetilde{X^T X}$ and $\widetilde{X^T y}$ from $\widetilde{Z^T Z}$ using the block structure of Equation~\eqref{eq:ZTZ-blocks}
\end{algorithmic}
\end{algorithm}

This implies we can solve DP linear regression by i) retrieving $\widetilde{X^T X}$ and $\widetilde{X^T y}$ as outlined in Algorithm~\ref{alg:DPQuery}, and ii) finding $\hat\theta_{\text{DP}} = \widetilde{X^TX}^{-1} \widetilde{X^Ty}$.

\begin{proposition}\label{th:DP}
    \AIMSSP{} is ($\epsilon, \delta$)-DP.
\end{proposition}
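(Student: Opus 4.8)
The plan is to prove privacy by appealing to the post-processing property of differential privacy (Definition~\ref{def:post-processing}), since \AIMSSP{} (Algorithm~\ref{alg:DPQuery}) consists of a single private release followed entirely by deterministic computation. First I would establish that the only step that touches the sensitive dataset $\mathcal{D}$ is Line~\ref{line:DPtablesOLS}, the call to \AIM{}$(\mathcal{D}, \epsilon, \delta)$. The mechanism \AIM{} is $(\epsilon, \delta)$-DP by construction \cite{mckenna2022aim}; internally it operates under zCDP, but the budget conversion is handled so that the overall guarantee matches the $(\epsilon, \delta)$ parameters we supply. Hence the collection of privately computed pairwise marginals $M = \{\mu_{j,k}\}$ is an $(\epsilon,\delta)$-DP release of the data.

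Next I would argue that the remaining steps are pure post-processing of $M$. Line 2 forms $(\widetilde{Z^T Z})_{[j,k]} = A_j \langle \mu_{j,k}\rangle A_k^T$ (Proposition~\ref{prop:ZTZ}), which depends on $\mathcal{D}$ only through the private marginals $M$: the encoding matrices $A_j$ are fixed, data-independent linear transformations specified by the chosen numerical encoding (Section~\ref{sec:encoding}), so this map is a deterministic function of $M$ alone. Line 3 then extracts $\widetilde{X^T X}$ and $\widetilde{X^T y}$ by reading off the blocks of $\widetilde{Z^T Z}$ according to the fixed block structure of Equation~\eqref{eq:ZTZ-blocks}, which is again a deterministic reindexing that uses no additional information about $\mathcal{D}$.

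Composing these observations, the entire output of \AIMSSP{} is obtained by applying a deterministic function $g$ to the $(\epsilon,\delta)$-DP output of \AIM{}. By Definition~\ref{def:post-processing}, applying $g$ to this output preserves $(\epsilon,\delta)$-DP, which is exactly the claim.

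I do not expect a genuine obstacle here, since the argument reduces entirely to invoking post-processing. The one point I would state explicitly to make the proof airtight is that \emph{every} post-release operation must be independent of the raw data: the encoding matrices $A_j$ and the feature/target block partition are all chosen without looking at $\mathcal{D}$. Once this is confirmed, no privacy budget is consumed beyond the single \AIM{} call, and the result follows immediately.
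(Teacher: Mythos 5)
Your proof is correct and follows exactly the paper's own argument: the single call to \AIM{} provides the $(\epsilon,\delta)$-DP guarantee, and all subsequent steps (forming $\widetilde{Z^TZ}$ via the fixed encoding matrices $A_j$ and extracting blocks) are deterministic post-processing, so Definition~\ref{def:post-processing} gives the result. The extra care you take in noting that the $A_j$ and the block partition are data-independent is a worthwhile explicit check, but it does not change the substance of the argument.
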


The proof follows directly from the $(\epsilon, \delta)$-DP properties of the marginal-releasing algorithm (in our case, \AIM{}), and the fact that all subsequent steps are post-processing of a DP result (Definition~\ref{def:post-processing}).

\subsection{Logistic regression}

Logistic regression predicts the probability a binary label $y \in \{-1, +1\}$ takes value $+1$ as $p = 1/(1+\exp(-x \cdot \theta))$, where $\theta \in \R^p$ is a coefficient vector and $x \cdot \theta$ is the dot-product.
The log-likelihood function is
\begin{align*}
\ell(\theta) &= \sum_{i=1}^{n} \phi(x^{(i)}\cdot\theta y^{(i)}) 
\end{align*}
where $\phi(s) := -\log(1+e^{-s})$. 
Optimizing this log-likelihood is a convex optimization problem solvable numerically via standard optimizers. The log-likelihood does not have \emph{finite} sufficient statistics. However,~\cite{huggins2017pass} offers a polynomial strategy to obtain \emph{approximate} sufficient statistics for generalized linear models (GLMs), including logistic regression. 
\citet{kulkarni2021differentially} used a similar polynomial approximation for private Bayesian GLMs.
We propose a novel DP logistic regression method that combines two ideas: i) we use a Chebyshev approximation of the logistic regression log-likelihood based on~\cite{huggins2017pass}, which allows us to write the objective in terms of approximate sufficient statistics, and then ii) use \AIM{} privately released marginals to estimate the approximate sufficient statistics without accessing the sensitive data. This gives us the option to directly optimize an approximate log-likelihood based on privatized linear queries computed by \AIM{}. The choice of the input workload for \AIM{} depends on the characterization of the approximate log-likelihood. Based on our derivation below, we find that the best workload input for logistic regression is all pairwise marginals.

\citet{huggins2017pass} propose to approximate $\ell(\theta)$ by using an degree-$M$ polynomial approximation of the function $\phi$:
\begin{align*}
\phi(s) \approx \phi_{M}(s) := \sum_{m=0}^{M} {b_m}^{(M)} s^{m}
\end{align*}
where $b_j^{(M)}$ are constants. There are different choices for the orthogonal polynomial basis, and as in~\cite{huggins2017pass}, we focus on Chebyshev polynomials, which provide uniform quality guarantees over a finite interval $[-R, R]$ for a positive $R$. We can then write
\begin{align*}
\ell(\theta) &\approx \sum_{i=1}^{n} \sum_{m=0}^{M} b_m^{(M)} (x^{(i)} \cdot \theta y^{(i)})^{m}
\end{align*}

We choose to work with a degree-2 Chebyshev approximation over range $[-6, 6]$ (Figure \ref{fig:chebyshev}).
\begin{figure}
    \centering
    \includegraphics[width=0.4\linewidth]{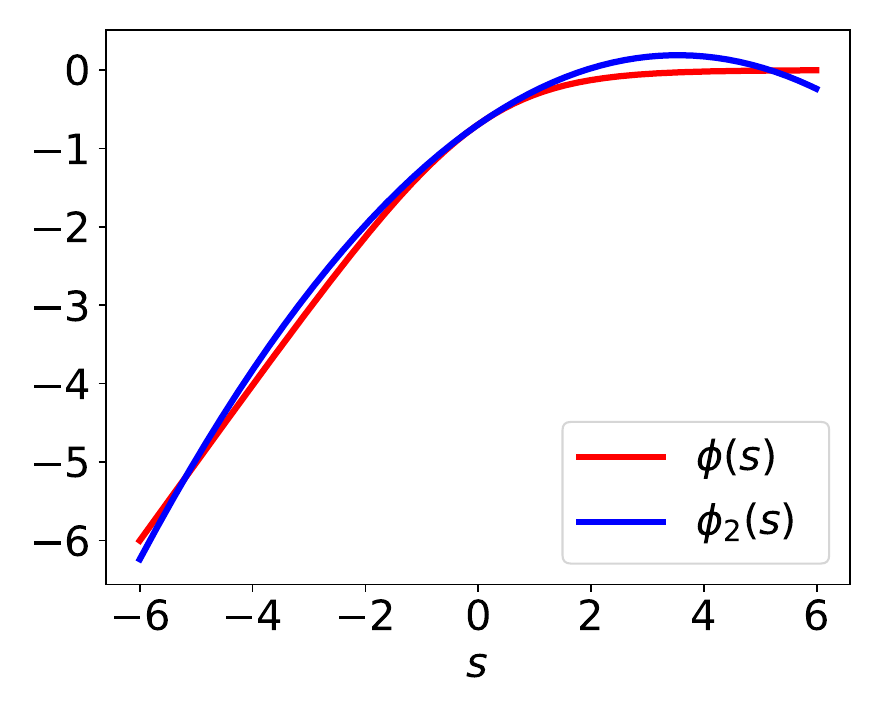}
    \caption{Degree 2 Chebyshev approximation of the logit function $\phi$, where $\phi (s) := - \log (1+e^{-s})$}
    \label{fig:chebyshev}
\end{figure}

\begin{proposition}\label{th:logregproposition}
    The logistic regression log-likelihood is approximated by second order Chebyshev polynomial 
    $\tilde\ell(\theta) \approx n b_0^{(2)} + b_1^{(2)} \theta \widetilde{X^{T} y}  +  b_2^{(2)} \cdot (\theta^T  \widetilde{X^T  X}  \theta )$, where $b_0^{(2)}, b_1^{(2)}, b_2^{(2)}$ are constants, and $\widetilde{X^T  X}$ and $\widetilde{X^{T} y}$ can be retrieved from pairwise marginals as in Proposition~\ref{prop:ZTZ}.
\end{proposition}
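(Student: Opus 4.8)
The plan is to start from the degree-$M$ Chebyshev approximation already displayed in the excerpt, specialize to $M=2$, and expand the resulting double sum over $m \in \{0,1,2\}$ term by term, recognizing each term as a function of the sufficient statistics $X^T X$ and $X^T y$. First I would write
\begin{align*}
\tilde\ell(\theta) = \sum_{i=1}^n \sum_{m=0}^2 b_m^{(2)} \bigl(x^{(i)} \cdot \theta\, y^{(i)}\bigr)^m
\end{align*}
and treat the three terms separately.

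The $m=0$ term contributes the constant $\sum_{i=1}^n b_0^{(2)} = n b_0^{(2)}$. For the $m=1$ term, I would factor out $y^{(i)}$ and reindex over coordinates:
\begin{align*}
\sum_{i=1}^n b_1^{(2)} y^{(i)} \bigl(x^{(i)} \cdot \theta\bigr) = b_1^{(2)} \sum_j \theta_j \sum_{i=1}^n x^{(i)}_j y^{(i)} = b_1^{(2)} \theta^T (X^T y),
\end{align*}
using $(X^T y)_j = \sum_i x^{(i)}_j y^{(i)}$.

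The $m=2$ term is where the single nontrivial observation enters. Since $y^{(i)} \in \{-1,+1\}$, we have $(y^{(i)})^2 = 1$, so $\bigl(x^{(i)} \cdot \theta\, y^{(i)}\bigr)^2 = \bigl(x^{(i)} \cdot \theta\bigr)^2 = \theta^T x^{(i)} (x^{(i)})^T \theta$. Summing over $i$ and pulling $\theta$ outside gives $b_2^{(2)} \theta^T \bigl(\sum_i x^{(i)} (x^{(i)})^T\bigr) \theta = b_2^{(2)} \theta^T (X^T X) \theta$. I would flag this $\pm 1$ cancellation as the crux of the argument: it is what collapses the quadratic term into a pure function of $X^T X$, with no residual dependence on $y$ beyond the linear term. (If the labels were not $\pm 1$, the quadratic term would instead involve $\sum_i (y^{(i)})^2 x^{(i)}(x^{(i)})^T$, still a second moment but not the clean $X^T X$.)

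Finally, I would invoke Proposition~\ref{prop:ZTZ} to replace $X^T X$ and $X^T y$ with their privately estimated counterparts $\widetilde{X^T X}$ and $\widetilde{X^T y}$ reconstructed from pairwise marginals, yielding the stated expression for $\tilde\ell(\theta)$. I do not anticipate a genuine obstacle: the entire argument is an expansion plus one cancellation, and the only point requiring care is keeping the transpose conventions consistent (the proposition writes $\theta \widetilde{X^T y}$ for what is properly $\theta^T \widetilde{X^T y}$), together with the implicit understanding that ``$\approx$'' here denotes the substitution of the degree-2 truncation of $\phi$, whose uniform error on $[-6,6]$ is controlled by the Chebyshev approximation guarantee.
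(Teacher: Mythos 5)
Your proof is correct and follows essentially the same route as the paper's: expand the degree-2 Chebyshev approximation term by term, use $(y^{(i)})^2 = 1$ to collapse the quadratic term into $\theta^T X^T X\, \theta$, and substitute the marginal-based estimates via Proposition~\ref{prop:ZTZ}. Your explicit flagging of the $\pm 1$ cancellation and of the transpose sloppiness in the statement ($\theta\,\widetilde{X^T y}$ versus $\theta^T \widetilde{X^T y}$) are fair observations but do not constitute a different argument.
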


The proof is provided in Appendix~\ref{app:approx}. This allows us to define a logistic regression objective where $\widetilde{X^{T} X}$ and $\widetilde{X^{T} y}$ are obtained via Algorithm~\ref{alg:DPQuery}, which is DP by post-processing (Proposition~\ref{th:DP}). The objective can then be optimized directly via standard (non-private) procedures.

\subsection{Baselines} \label{sec:baselines}
We compare \AIMSSP{} to synthetic data method \AIMSynth{}, where private measurements are used to estimate the underlying distribution, and surrogate data is sampled from it. 
We train \AIMSynth{} with an input workload of all pairwise marginals to match the workload utilized for \AIMSSP{}. Since marginal-based synthetic data is designed to preserve the same linear queries that are sufficient to solve linear regression, or approximately sufficient for logistic regression, the expectation is that \AIMSSP{} will closely match the performance of \AIMSynth{}. As seen in Figure~\ref{fig:workflow}, the difference between these approaches is whether linear queries for sufficient statistics are computed directly from marginals estimated by the mechanism, or computed from synthetic data.

We also compare both methods against established DP baselines. Since our methods are based on privately reconstructing sufficient statistics, for DP linear regression sufficient statistic perturbation (\SSP{}) is the natural baseline choice. 
We choose \AdaSSP{} \citep{wang2018revisiting} for its competitive performance. As other \SSP{} methods, \AdaSSP{} uses limited data-adaptivity to add a ridge penalty based on an estimated bound on the eigenvalues of $X^T X$, but then adds independent noise to each entry of the sufficient statistics, unlike fully data-adaptive query-answering mechanisms.
The \AdaSSP{} algorithm is described in detail in Appendix~\ref{app:linreg}. 
For logistic regression, objective perturbation (\ObjPert{}) is a widely adopted solution originally proposed by \cite{chaudhuri2011differentially}, and further refined by \cite{kifer2012private} where it is extended to $(\epsilon, \delta)$-DP, with more general applicability and improved guarantees. 
The algorithm and details are provided in Appendix~\ref{app:logreg}. 

Both \AdaSSP{} and \ObjPert{} determine how much noise to add based on $\boundx$, which is the upper bound to the $L_2$-norm of any row $x$ of $X$ (Section~\ref{sec:encoding}).
For example, in \AdaSSP{}, $X^TX$ is noise-perturbed proportionally to $\boundx^2$.
From Proposition~\ref{prop:boundx}, we can set $\|\X\|^2 = \|\mathcal U\|^2 + c$ where $\|\mathcal U\|$ is a bound on the numerically-encoded features and $c$ is the number of one-hot-encoded attributes to obtain a tight sensitivity bound for these baselines.

\subsection{Limitations} \label{sec:limitations}
\AIMSSP{} takes privatized marginals as input.
When choosing \AIM{} as the mechanism for outputting DP marginals, we are limited to working with discrete data, which is a requirement in \AIM{} itself. 
Thus, our comparisons to other regression methods are scoped to discrete numerical data.
Future work may consider \AIMSSP{} with other mechanisms that support continuous data without discretization.
For the logistic regression approximation, we use Chebyshev second order polynomials; other approximation functions and/or degrees of precision could be evaluated, but are outside the scope of this paper. As shown in section \ref{app:experiments}, \AIMSSP{} demonstrates overall gains in regression accuracy compared to baseline methods, however this improvement comes at the cost of increased computational time, which is tied to the choice of \AIM{} as our private marginal releasing method. \AIM{}'s runtime increases with the size of the data domain and with $\epsilon$ (see Appendix \ref{app:experiments} for more detail). While our methods do introduce an increased computational burden, this tradeoff is justified by the substantial gains in accuracy. In terms of experimentation (see Section \ref{sec:experiments} below), for computational viability we study the variability of our results across 5 trials.

\section{Experiments} \label{sec:experiments}
In our experiments we evaluate the effectiveness of \AIMSSP{} for linear and logistic regression against \AdaSSP{} 
and \ObjPert{} respectively. Additionally, we assess the similarity between the performance of \AIMSSP{} and that of \AIM{} synthetic data (\AIMSynth{}), suggesting that data-dependent estimation (or approximation) of sufficient statistics explains the operational mechanisms that make marginal-based synthetic data suitable for machine learning tasks.\footnote{All experiment code is available at \url{https://github.com/ceciliaferrando/DD-SSP}.}

We run linear regression on real datasets Adult~\citep{misc_adult_2}, ACSIncome~\citep{ding2021retiring}, Fire~\citep{ridgeway2021challenge}, and Taxi~\citep{taxi}. For Adult, we choose `num-education' (number of education years) as the predicted variable. For ACSIncome, we queried the source database for the state of California and survey year 2018. The target variable is `PINCP' (income), discretized into 20 bins. For Fire, the target is `Priority' (of the call), and for Taxi it is `totalamount' (total fare amount of the ride).\footnote{The ACS data is sourced from \url{https://github.com/socialfoundations/folktables}. All other datasets are sourced from \url{https://github.com/ryan112358/hd-datasets}.} For every dataset, data is shuffled and split between $1,000$ data points as test set and the rest as the training set, capped at a maximum of $50,000$ train data points.
\AIM{} is trained with model size 200MB, maximum number of iterations 1,000, and a workload of all pairwise marginals. 
Other than \AIM{}, which uses the original feature levels, for training purposes non-numerical features are one-hot encoded as outlined in~\ref{sec:encoding}, dropping the first level to avoid multi-collinearity, and discrete numerical features keep the encoding from $1$ to $m_j$.
To calibrate the sensitivity accordingly, and avoid unnecessarily penalizing \AdaSSP{}, the bound $\|\X\|$ is computed as described in Sections \ref{sec:encoding} and \ref{sec:baselines}.
Numerical features are also rescaled to domain $[-1, 1]$ so that DP noise is allocated roughly evenly across statistics of different features.
We compare the Mean Squared Error (MSE) of DP query-based methods \AIMSSP{} and \AIMSynth{} 
against DP baseline \AdaSSP{} and the public baseline for $\epsilon \in \{0.05, 0.1, 0.5, 1.0, 2.0\}$, with a fixed $\delta = 1e-5$. The plots in Figure~\ref{fig:linlogplot} show that \AIMSSP{} and \AIMSynth{} have nearly identical performance and both improve significantly upon the DP baseline \AdaSSP{} on all datasets except ACSincome, where the performance is similar. 
\begin{figure*}[ht]
    \centering
    \makebox[\textwidth][c]{\includegraphics[width=1.\textwidth]{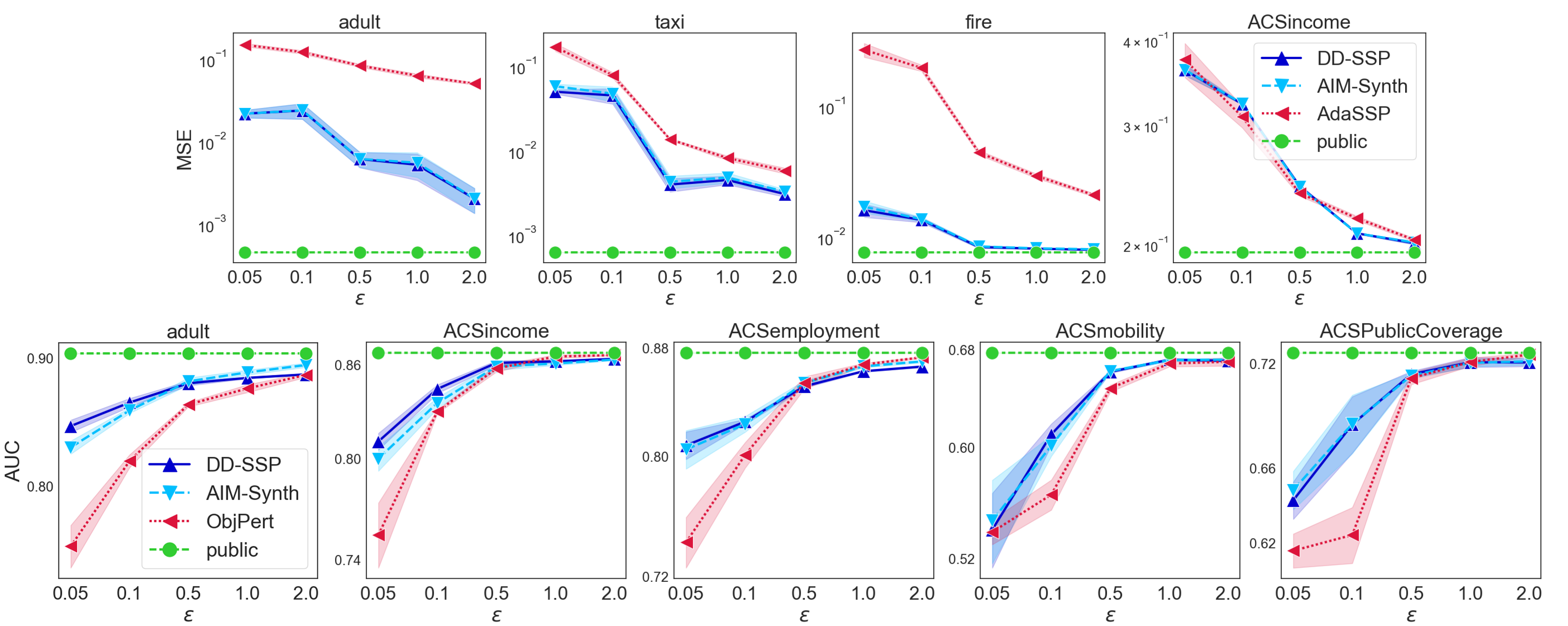}}
    \caption{Linear regression MSE results (first row) and logistic regression AUC results (second row). Standard error bars are computed over 5 trials.}
    \label{fig:linlogplot}
\end{figure*}
We run the same set of experiments for logistic regression on Adult, and four ACS datasets \citep{ding2021retiring}, ACSincome (target binary `PINCP' with cutoff at 50k), ACSmobility (`MIG'), ACSemployment (`ESR'), and ACSPublicCoverage (`PUBCOV'). All datasets are for the state of California and survey year 2018. 
The results in Figure \ref{fig:linlogplot} show that the performance of \AIMSSP{} closely matches \AIMSynth{}, surpassing it slightly in the case of Adult and ACSincome.
Both methods display overall improved AUC performance with respect to the \ObjPert{} baseline, with clear improvements in the low-$\epsilon$ regime, and close performance at higher $\epsilon$. 

Based on these results, we observe that:
i) \AIMSSP{} is a competitive option for DP linear and logistic regression, with better performance than data-independent \SSP{} for linear regression and \ObjPert{} for logistic regression;
ii) the same good performance can be achieved by training \AIMSynth{} with a workload of two-way marginals, which corresponds to estimating problem-specific data-dependent sufficient statistics. This suggests that the query-based sufficient statistics approximation explains the suitability of \AIM{} synthetic data for machine learning. At the same time, it implies \AIMSSP{} is a convenient and effective DP option to solve linear and logistic regression whenever pairwise marginals are available;
iii) the approximate \AIMSSP{} method for logistic regression constitutes a novel DP algorithm for solving logistic regression privately as an alternative to privatized ERM procedures.

\section{Discussion}
We introduce methods for data-dependent sufficient statistic perturbation (\AIMSSP{}). Our methods use privately released marginal tables to solve linear and logistic regression via sufficient statistics. We find that \AIMSSP{} performs better than data-independent SSP on linear regression and objective perturbation for logistic regression. Notably, the approximate \AIMSSP{} logistic regression algorithm is the first DP logistic regression method that allows analysts to solve logistic regression via a SSP algorithm, directly minimizing the approximate loss function. Additionally, we find that the performance of \AIMSSP{} is almost indistinguishable from that of \AIM{} synthetic data: this suggests that with the appropriate workload, training these machine learning models on query-based DP synthetic data corresponds to data-dependent SSP. 
Beyond linear regression, all exponential family distributions, including graphical models like Naive Bayes, have finite sufficient statistics, and for such models we can devise similar \AIMSSP{} solutions, or tailor synthetic data, by identifying a workload that supports the estimation of their sufficient statistics.
Future work can focus on developing approximate loss functions with finite sufficient statistics for a broader class of other models, including generalized linear models (GLMs). This will open the door to novel \AIMSSP{} methods that directly minimize the approximate loss functions, and improve utility by adding privacy noise in a data-dependent way. Additionally, methods targeting encoded workload $A_j \langle \mu_{j,k}\rangle A_k^T$ instead of $\mu_{j,k}$ can be explored and combined with advanced data-independent mechanisms like the matrix mechanism, potentially leading to a new class of encoding-aware \SSP{} methods. 

\section*{Acknowledgements}
This material is based upon work supported by the National Science Foundation under Grant No. 1749854, and by Oracle through a gift to the University of Massachusetts
Amherst in support of academic research.

\bibliography{references}

\newpage
\appendix
\onecolumn

\section{Logistic Regression log-likelihood approximation}\label{app:approx}
\begin{proof}
The log-likelihood for logistic regression can be expressed as
\begin{align*}
\ell(\theta) &= \sum_{i=1}^{n} \phi(x^{(i)} \cdot \theta y^{(i)}) 
\end{align*}

where $\phi(s) := -\log(1+e^{-s})$. Based on \cite{huggins2017pass}, we can approximate the logistic regression log-likelihood with a Chebyshev polynomial approximation of degree $M$:
\begin{align*}
\phi(s) \approx \phi_{M}(s) := \sum_{m=0}^{M} {b_m}^{(M)} s^{m}
\end{align*}
where $b_j^{(M)}$ are constants. Then,
\begin{align*}
\ell(\theta) &\approx \sum_{i=1}^{n} \sum_{m=0}^{M} b_m^{(M)} (x^{(i)} \cdot \theta y^{(i)})^{m}
\end{align*}
If we choose $M=2$, 
\begin{align*}
\ell(\theta) &\approx \sum_{i=1}^{n} b_0^{(2)} + b_1^{(2)} \cdot (x^{(i)} \cdot \theta y^{(i)}) + b_2^{(2)} \cdot (x^{(i)} \cdot \theta y^{(i)})^2
\end{align*}
The quadratic term is
\begin{align*}
     \sum_{i=1}^{n} (x^{(i)} \cdot \theta y^{(i)})^2 &= \sum_{i=1}^{n} {y^{(i)^2}} \sum_{j,k=1}^{d} x^{(i)}_j x^{(i)}_k \theta_j \theta_k = \sum_{j,k=1}^{d} \theta_j \theta_k \sum_{i=1}^n {y^{(i)^2}} x^{(i)}_j x^{(i)}_k
\end{align*}

Therefore we can rewrite the approximate log-likelihood as
\begin{align*}
\ell(\theta) &\approx n b_0^{(2)} + b_1^{(2)} \sum_{i=1}^{n}  x^{(i)} \cdot \theta y^{(i)} + b_2^{(2)} \sum_{j,k=1}^{d} \theta_j \theta_k \sum_{i=1}^n {y^{(i)}}^2 x^{(i)}_j x^{(i)}_k\\
&\approx n b_0^{(2)} + b_1^{(2)} y^{T} X \theta+  b_2^{(2)} \sum_{j,k=1}^{d} \theta_j \theta_k \sum_{i=1}^n x^{(i)}_j x^{(i)}_k\label{eq:approxll}\\
&\approx n b_0^{(2)} + b_1^{(2)} y^{T} X \theta+  b_2^{(2)}  \theta^T X^T X \theta 
\end{align*}
where the simplification in the last line follows from the fact that we work with $y^{(i)} \in \{1, -1\}$ and ${y^{(i)}}^2 = 1$ for any $i$.

$X^T X$ in the third term and $y^T X$ can be derived following the same proposed strategies as in linear regression (Section~\ref{sec:methods-linreg}), obtaining marginal query-based estimates $\widetilde{X^T  X}$ and $\widetilde{y^{T} X}$. We can then express the log-likelihood as
\begin{align*}
\ell(\theta) &\approx n b_0^{(2)} + b_1^{(2)} \widetilde{y^{T} X} \theta +  b_2^{(2)} \cdot (\theta^T  \widetilde{X^T  X}  \theta )
\end{align*}
\end{proof}

\section{Linear regression baseline}\label{app:linreg}

Algorithm \ref{alg:adassp} outlines the \AdaSSP{} method for linear regression \cite{wang2018revisiting}.
\begin{algorithm}
\begin{algorithmic}[1]
\label{alg:adassp}
\caption{AdaSSP \citep{wang2018revisiting}}
\STATE \textbf{Input:} Data $X$, $\boldsymbol{y}$. Privacy budget: $\epsilon$, $\delta$. Bounds: $\|\mathcal{X}\|$, $\|\mathcal{Y}\|$, $\rho \in (0,1)$ (0.05 in the paper)
\STATE \textbf{1.} Calculate the minimum eigenvalue $\lambda_{\min }\left(X^T X\right)$.
\STATE \textbf{2.} Privately release $\tilde{\lambda}_{\min }=\max \left\{\lambda_{\min }+\frac{\sqrt{\log (6 / \delta)}}{\epsilon / 3}\|\mathcal{X}\|^2 Z-\frac{\log (6 / \delta)}{\epsilon / 3}\|\mathcal{X}\|^2, 0\right\}$, where $Z \sim \mathcal{N}(0,1)$.
\STATE \textbf{3.} Set $\lambda=\max \left\{0, \frac{\sqrt{d \log (6 / \delta) \log \left(2 d^2 / \rho\right)}\|\mathcal{X}\|^2}{\epsilon / 3}-\tilde{\lambda}_{\min }\right\}$.
\STATE \textbf{4.} Privately release $\widehat{X^T X}=X^T X+\frac{\sqrt{\log (6 / \delta)}\|\mathcal{X}\|^2}{\epsilon / 3} Z$ for $Z \in \mathbb{R}^{d \times d}$ is a symmetric matrix and every element from the upper triangular matrix is sampled from $\mathcal{N}(0,1)$.
\STATE \textbf{5.} Privately release $\widehat{X^T y}=X^T y+\frac{\sqrt{\log (6 / \delta)}\|\mathcal{X}\|\|\mathcal{Y}\|}{\epsilon / 3} Z$ for $Z \sim \mathcal{N}\left(0, I_d\right)$.
\STATE \textbf{Output:} $\tilde{\theta}=\left( \widehat{X^T X}+\lambda I \right)^{-1} \widehat{X^T y}$.
\end{algorithmic}
\end{algorithm}

To reason about the sensitivity of $X^TX$, consider two neighboring datasets $X \in \mathbb{R}^{n \times d}$ and $X' \in \mathbb{R}^{(n+1) \times d}$ differing by one data entry $v \in \mathcal{X}$, where $v$ is a $d \times 1$ vector. Then,
\begin{align*}\Delta_{X^T X} = \text{sup}_{X \sim X'} \Vert f(X') - f(X) \Vert_F\end{align*}
Since $X$ and $X'$ only differ by one row ($v$), then $f(X') - f(X) = vv^T$~\citep{sheffet2017differentially}. 

So the sensitivity is maximum over $v$ of $\|\text{vec}(vv^T)\| = \|vv^T\|_F$.
We have

\begin{align*}
\Delta_{X^T X}^2 &=  \sup_{v \in \X} \|vv^T\|_F^2 \\
&= \sup_{v \in \X} \sum_{i=1}^d \sum_{j=1}^d (v_i v_j)^2 \\
&= \sup_{v \in \X} \left(\sum_{i=1}^d v_i^2 \right) \left(\sum_{j=1}^d v_j^2\right) \\
&= \sup_{v \in \X} \|v\|^4 \\
&= \|\X\|^4
\end{align*}

where $\Vert \mathcal{X} \Vert$ is the greatest possible norm of a vector in the domain $\mathcal{X}$. 
Therefore,
\begin{align*}
\Delta_{X^T X} = \|\X\|^2.
\end{align*}

The sensitivity of $X^Ty$ can be similarly derived.
Given neighboring datasets $X \in \mathbb{R}^{n \times d}, y \in \mathbb{R}^n$, and $X' \in \mathbb{R}^{(n+1) \times d}, y' \in \mathbb{R}^{n+1}$, where $v \in \mathcal{X} \subset \mathbb{R}^d$ is the new row, and $w \in \mathcal{Y} \subset \mathbb{R}$ is the new value in $y'$. Then,
\begin{align*}
 \Vert f\left(X', y'\right)-f(X, y) \Vert = \Vert X'^T y'-X^T y \Vert = \Vert w v \Vert
\end{align*}
Since $\|\mathcal{X}\|=\sup_{x \in \mathcal{X}}\|x\|$ and $\|\mathcal{Y}\|=\sup _{y \in \mathcal{Y}}|y|$, we have
\begin{align*}
\Delta_{X^T y} &=\sup _{(X, y) \sim\left(X', y'\right)}\left\|f\left(X', y'\right)-f(X, y)\right\|\\
&=\sup _{w \in \mathcal{Y}, v \in \mathcal{X}} |w| \cdot\|v\| = \Vert \mathcal{Y} \Vert \cdot \Vert \mathcal{X} \Vert
\end{align*}

\section{Logistic regression baseline}\label{app:logreg}
Our DP logistic regression baseline is based on the generalized objective perturbation algorithm in \cite{kifer2012private} (Algorithm~\ref{alg:genobjpert}). In this section, to match the notation in \cite{kifer2012private} $\ell\left(\theta ; z\right) = \log(1 + \exp(-x \cdot \theta y))$ is the loss for a single datum and $\hat {\mathcal L}(\theta; \mathcal D) = \frac{1}{n} \sum_{i=1}^n \ell(\theta; z^{(i)})$ is the average loss over the dataset.

\begin{algorithm} \label{alg:genobjpert}
\caption{Generalized Objective Perturbation Mechanism (\ObjPert{}) \citep{kifer2012private}}
\label{alg:obj-pert}
\begin{algorithmic}[1]
\REQUIRE dataset $\mathcal{D}=\left\{z^{(1)}, \ldots, z^{(n)}\right\}$, where $z^{(i)} = (x^{(i)}, y^{(i)})$, privacy parameters $\epsilon$ and $\delta$ ($\delta=0$ for $\epsilon$-differential privacy), bound $\boundx$ on the $L_2$ norm of any $x$ entry, convex regularizer $r$, a convex domain $\mathbb{F} \subseteq \mathbb{R}^d$, convex loss function $\hat{\mathcal{L}}(\theta ; \mathcal{D})=\frac{1}{n} \sum_{i=1}^n \ell\left(\theta ; z^{(i)}\right)$, with continuous Hessian, $\|\nabla \ell(\theta ; z)\| \leq \boundx$ (for all $z \in \mathcal{D}$ and $\theta \in \mathbb{F}$), and the eigenvalues of $\nabla^2 \ell(\theta ; z)$ bounded by $\frac{\boundx ^ 2}{4}$ (for all $z$ and for all $\theta \in \mathbb{F}$).
\STATE Set $\Delta \geq  \frac{\boundx ^ 2}{2\epsilon} $.
\STATE Sample $b \in \mathbb{R}^d$ from $\nu_2(b ; \epsilon, \delta, \boundx)=\mathcal{N}\left(0, \frac{\boundx^2\left(8 \log \frac{2}{\delta}+4 \epsilon\right)}{\epsilon^2} I_{d}\right)$.
\STATE $\hat\theta_{\text {DP}} \equiv \arg \min _{\theta \in \mathbb{F}} \hat{\mathcal{L}}(\theta ; \mathcal{D})+\frac{1}{n} r(\theta)+\frac{\Delta}{2 n}\|\theta\|^2+\frac{b^T \theta}{n}$.
\end{algorithmic}
\end{algorithm}

The algorithm requires the following bounds for the gradient and Hessian of $\ell$:
\begin{align*}
\|\nabla \ell(\theta; z)\| &\leq \| \mathcal{X} \| \\
\lambda_{\max}(\nabla^2\ell(\theta;z)) &\leq \frac{\| \mathcal{X} \|^2}{4}
\end{align*}

To reason about the sensitivity bounds, let $\phi(s) = \log(1+e^s)$.
 Then we can write $\ell(\theta; z) = \phi(- x \cdot \theta y)$. Following~\cite{Gower}, it is straightforward to derive that

\begin{align*}
\phi'(s) &= \frac{e^s}{1+e^s} \leq 1 \\
\phi''(s) &= \frac{e^s}{(1+e^s)^2} \leq \frac{1}{4}
\end{align*}

and clear that both quantities are non-negative. Then the gradient of $\ell$ is:
\begin{align*}
\nabla \ell(\theta; z) = \nabla_\theta \phi(- x \cdot \theta y) = \phi'(- x \cdot \theta y)\cdot -y x
\end{align*}
The norm is bounded as
\begin{align*}\| \nabla \ell(\theta; z)\| = |\phi'(- x \cdot \theta y)|\cdot |y|  \cdot \|x\| \leq \boundx
\end{align*}
where the inequality holds since $|\phi'(s)| \leq 1$ for all $s$ and $|y| = 1$.

By differentiating the gradient again and using the fact that $y^2 = 1$, we can derive the Hessian as:
\begin{align*}
\nabla^2 \ell(\theta; z) = \phi''(- x \cdot \theta y) x x^T
\end{align*}
The maximum eigenvalue is
\begin{align*}
\lambda_{\max}(\nabla^2 \ell(\theta; z)) 
&= \phi''(- x \cdot \theta y) \lambda_{\max}(x x^T) \\
&= \phi''(- x \cdot \theta y) \|x\|^2 \\
&\leq \frac{\boundx^2}{4}
\end{align*}

In the second line, we used the fact that $\lambda_{\max}(xx^T) = \|x\|^2$. To see this, note that $x x^T$ is rank one and $(x x^T)x = \|x\|^2 x$, therefore $x$ is an eigenvector with eigenvalue $\|x\|^2$ and this is the largest eigenvalue.  In the last line we used that $\|\phi''(s)\| \leq \frac{1}{4}$ for all $s$ and that $\|x\| \leq \boundx$.

\section{Experiment details}\label{app:experiments}

\begin{table}[htbp]
\centering
\caption{Dataset information}
\begin{tabularx}{\textwidth}{|p{3cm}|l|p{1.8cm}|X|p{2.5cm}|}
\hline
\textbf{Dataset} & \textbf{Size} & \textbf{\# Attributes} & \textbf{Attributes} & \textbf{Target} \\ \hline
Adult        & 48,842          & 15                           & ['age', 'workclass', 'fnlwgt', 'education', 'marital-status',
       'occupation', 'relationship', 'race', 'sex', 'capital-gain',
       'capital-loss', 'hours-per-week', 'native-country', 'income$>$50K',
       'education-num'] & 'income$>$50K' (logistic), 'education-num' (linear)         \\ \hline
ACSIncome        & 195,665          & 9                           & ['AGEP', 'COW', 'SCHL', 'MAR', 'RELP', 'WKHP', 'SEX', 'RAC1P', 'PINCP'] & 'PINCP'         \\ \hline
Fire        & 305,119          & 15                            & ['ALS Unit', 'Battalion', 'Call Final Disposition', 'Call Type',
       'Call Type Group', 'City', 'Final Priority', 'Fire Prevention District',
       'Neighborhooods - Analysis Boundaries', 'Original Priority',
       'Station Area', 'Supervisor District', 'Unit Type',
       'Zipcode of Incident', 'Priority'] & 'Priority'         \\ \hline
Taxi        & 1,048,575          & 11                            & ['VendorID', 'passengercount', 'tripdistance', 'RatecodeID',
       'PULocationID', 'DOLocationID', 'paymenttype', 'fareamount',
       'tipamount', 'tollsamount', 'totalamount'] & 'totalamount'         \\ \hline
ACSEmployment        & 378,817          & 17                            & ['AGEP', 'SCHL', 'MAR', 'RELP', 'DIS', 'ESP', 'CIT', 'MIG', 'MIL', 'ANC',
       'NATIVITY', 'DEAR', 'DEYE', 'DREM', 'SEX', 'RAC1P', 'ESR'] & 'ESR'         \\ \hline
ACSPublicCoverage        & 138,550          & 19                            & ['AGEP', 'SCHL', 'MAR', 'SEX', 'DIS', 'ESP', 'CIT', 'MIG', 'MIL', 'ANC',
       'NATIVITY', 'DEAR', 'DEYE', 'DREM', 'PINCP', 'ESR', 'FER', 'RAC1P',
       'PUBCOV'] & 'PUBCOV'         \\ \hline
\end{tabularx}
\label{tab:datasets}
\end{table}

Information about the datasets used in the Experiments section is reported in Table~\ref{tab:datasets}.

All experiments are run on an internal cluster using Xeon Gold 6240 CPU @ 2.60GHz, 192GB RAM and 240GB local SSD disk. Experiment runtime varies depending on multiple factors. Our method is based on private marginals released by \AIM{} and is therefore tied to \AIM{}'s runtime, which is significantly longer than the runtime of our DP baselines. For \AIM{}, runtime increases with the domain size and $\epsilon$. In the following runtime analysis, we refer to a full experiment as one trial for each $\epsilon$ in $[0.05, 0.1, 0.5, 1.0, 2.0]$, at 200MB model size and 1,000 maximum iterations for \AIM{}, and running on the internal cluster specified above. For linear regression, the longest \AIM{} experiment is for dataset Fire, taking approximately 29 hours. The shortest linear regression experiment with the same setting is ACSincome, at approximately 50 minutes runtime. By comparison, \AdaSSP{} completes the corresponding experiments in approximately 5 seconds (Fire) and 1 second (ACSincome). For logistic regression, the longest \AIM{} experiment is for ACSPublicCoverage, with a runtime of approximately 43 hours, and the shortest is ACSincome at approximately 10 hours. The corresponding runtime for DP baseline \ObjPert{} is approximately 2 seconds in both cases. \AIM{} computation time grows with $\epsilon$. For example, in the case of linear regression on Adult, running \AIM{} takes approximately 8 minutes with $\epsilon = 0.05$, and approximately 18 hours for $\epsilon = 2.0$.

\end{document}